\def\eqref#1{equation~\ref{#1}}
\def\1{\bm{1}}
\DeclareMathAlphabet{\mathsfit}{\encodingdefault}{\sfdefault}{m}{sl}
\SetMathAlphabet{\mathsfit}{bold}{\encodingdefault}{\sfdefault}{bx}{n}
\theoremstyle{plain}
\newtheorem{theorem}{Theorem}[section]
\newtheorem{lemma}[theorem]{Lemma}
\title{SDSRA: A Skill-Driven Skill-Recombination Algorithm for Efficient Policy Learning }
\author{Eric H. Jiang and Andrew Lizarraga \thanks{} \\
Department of Statistics and Data Science\\
University of California, Los Angeles\\
Los Angeles, CA 90095-1554\\
\texttt{\{ericjiang0318, andrewlizarraga\}@g.ucla.edu} \\
}
\begin{document}

\maketitle

\begin{abstract}
In this paper we introduce a novel algorithm-the Skill-Driven Skill Recombination Algorithm (SDSRA)—an innovative framework that significantly enhances the efficiency of achieving maximum entropy in reinforcement learning tasks. We find that SDSRA achieves faster convergence compared to the traditional Soft Actor-Critic (SAC) algorithm and produces improved policies. By integrating skill-based strategies within the robust Actor-Critic framework, SDSRA demonstrates remarkable adaptability and performance across a wide array of complex and diverse benchmarks. 
%The algorithm leverages the principles of intrinsically motivated learning, skill acquisition, and dynamic skill recombination, enabling it to swiftly adapt to evolving environmental conditions and outperform existing methods. Through rigorous empirical evaluations, SDSRA establishes a new benchmark in reinforcement learning, showcasing its potential to tackle intricate tasks more effectively and efficiently than its predecessors.\\
\\Code: \url{https://github.com/ericjiang18/SDSRA/}.
\end{abstract}

\section{Introduction}
%\subsection{Background}
Reinforcement Learning (RL) has significantly advanced, with the Soft Actor-Critic (SAC) algorithm, introduced by \citet{HAZL18}, standing out for efficient exploration in complex tasks. Despite its strengths, SAC, like other RL methods, faces challenges in more intricate environments. To address these issues, recent research, such as goal-enforced hierarchical learning \citet{SANE+21} and intrinsically motivated RL with skill selection \citet{SIN+04}, focuses on enhancing RL frameworks. In this paper we address these issues and make the following contributions:

\begin{itemize}
    \item \textbf{Innovative Framework:} We introduce SDSRA a novel approach that surpasses SAC methods.
    \item \textbf{Integration of Intrinsic Motivation:} SDSRA incorporates intrinsically motivated learning within a hierarchical structure, enhancing self-directed exploration and skill development which is lacking in SAC.
    \item \textbf{Enhanced Skill Acquisition and Dynamic Selection:} Our method excels in acquiring and dynamically selecting a wide range of skills suitable for varying environmental conditions, offering greater adaptability.
    \item \textbf{Superior Performance and Learning Rate:} We demonstrate faster performance and a quicker learning rate compared to conventional SAC methods, leading to improved rewards in various benchmarks.
\end{itemize}

\subsection{Related Work}
Reinforcement learning research is expanding, particularly in hierarchical structures and intrinsic motivation. \citet{TANG21} developed a hierarchical SAC variant with sub-goals, yet lacks public code and detailed results. \citet{MA22} proposed ELIGN for predicting agent cooperation using intrinsic rewards, while \citet{AMS19} surveyed RL algorithms with intrinsic motivation. Other notable works include \citet{LLPY+22}'s skill learning algorithm combining intrinsic rewards and representation learning, \citet{Sharma2019DynamicsAware}'s skill discovery algorithm, \citet{BAG+20}'s skill discovery algorithm, and \citet{ZHENG+18}'s intrinsic reward mechanism for Policy Gradient and PPO algorithm. Despite progress, a gap persists in skill-driven recombination algorithms using intrinsic rewards in Actor-Critic frameworks, particularly in physical environments like MuJoCo Gym. Our SDSRA work addresses this, blending skill-driven learning with Actor-Critic methods, proving effective in complex simulations.

\section{Motivation for SDSRA}
The SDSRA algorithm adapts the SAC framework, retaining its integration of rewards and entropy maximization, and using actor and critic networks for action selection and evaluation. While SAC emphasizes entropy for diverse exploration, SDSRA introduces a novel selection scheme for enhanced performance in complex environments. SDSRA defines a set of Gaussian Policy skills \( S = \{\pi_1, \pi_2, \ldots, \pi_N\} \) with parameters \( \theta_i \) representing mean \( \mu_{\theta_i}(s) \) and covariance \( \Sigma_{\theta_i}(s) \). Each skill \( \pi_i \) is formulated as: $\pi_i(\theta_i) = \mathcal{N}(\mu_{\theta_i}(s), \Sigma_{\theta_i}(s))$.
Skills initially have a relevance score \( r_i = c \), and skill selection is probabilistic, based on softmax distribution of relevance scores: $P(i|s) = \frac{e^{r_i}}{\sum_{j=1}^{N} e^{r_j}}$. Skill optimization in SDSRA involves minimizing a loss function \( \text{loss}_i = \varepsilon_i + \beta \cdot \mathcal{H}(\pi_i)\), combining prediction error \( \varepsilon_i = \frac{1}{M} \sum_{m=1}^{M} (\hat{a}_{i,m} - a_m)^2\) and policy entropy \( \mathcal{H}(\pi_i) = -\int \pi_i(a|s) \log(\pi_i(a|s)) \, da\). Precise parameter updates and implementations details are discussed in Appendix. \ref{sec::alg}. SDSRA's decision-making involves selecting and executing actions based on skill selection and continuous skill refinement, enabling adaptive and effective decision-making in diverse environments. In the integrated framework, the SAC objective function is modified to incorporate the dynamic skill selection process. The new objective function aims to maximize not just the expected return, but also the entropy across the diverse set of skills. The modified objective function is expressed as:
\begin{equation}
J_{\text{integrated}}(\pi) = \sum_{i=1}^{N} P(i|s) \left( \mathbb{E}_{(s_t, a_t) \sim \pi_i}\left[Q(s_t, a_t) + \alpha \mathcal{H}(\pi_i(\cdot|s_t))\right] \right)
\end{equation}
where \( Q(s_t, a_t) \) represents the action-value function as estimated by SAC’s critic networks, and \( \alpha \) scales the importance of the entropy term \( \mathcal{H}(\pi_i(\cdot|s_t)) \) for each skill \( \pi_i \). Under this proposal we find that SDSRA converges to an improved policy, see Appendix. \ref{sec::lemma1} and Appendix. \ref{sec::thm1}. Moreso, we find that
experiments ran on a commonly tested dataset for SAC algorithms demonstrates 
significant improvements in SDSRA over SAC.

\section{Experiments}
%%
%\begin{wrapfigure}{l}{0.45\textwidth}
%\begin{center}
%\includegraphics[scale=0.35]{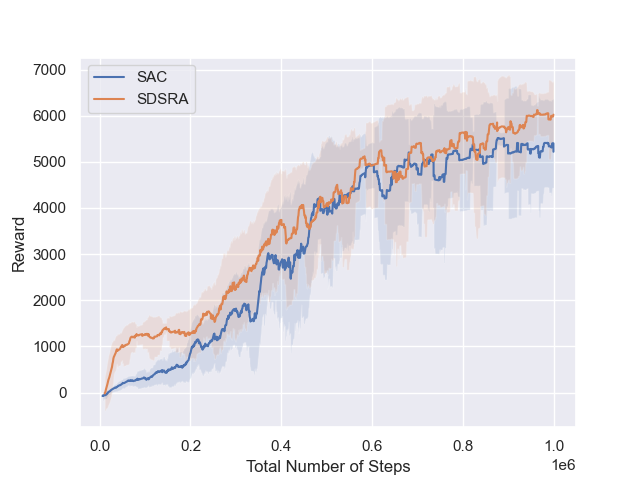}
%\includegraphics[scale=0.35]{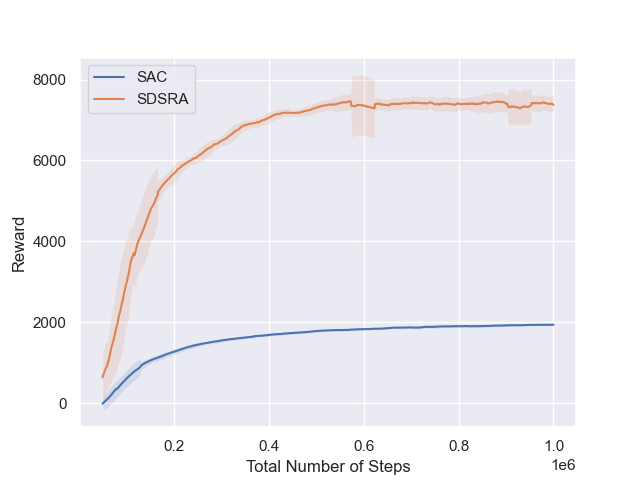}
%\end{center}
%\caption{Performance comparison of SDSRA and SAC in the MuJoCo Ant and Half-Cheetah environment.}
%\end{wrapfigure}
\begin{figure}[ht]
\centering
\subfloat[Ant-v2]{\includegraphics[scale=0.28]{imgs/Ant_new.png}}
\hfill
\subfloat[Half-Cheetah-v2]{\includegraphics[scale=0.28]{imgs/Cheetah.png}}
\hfill
\subfloat[Hopper-v2]{\includegraphics[scale=0.28]{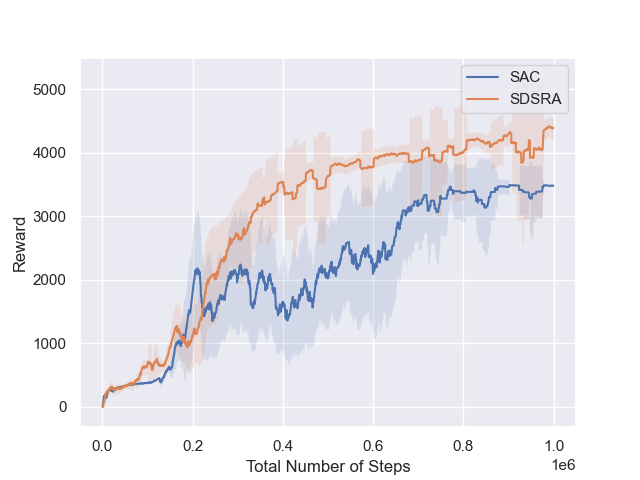}}
\caption{Performance comparison of SDSRA and SAC in the MuJoCo Ant, Half-Cheetah, and Hopper environment.}
\label{fig:environments}
\end{figure}

%Our evaluation of the Skill-Driven Skill Recombination Algorithm (SDSRA) focuses on its performance within the widely recognized MuJoCo gym locomotion tasks \citet{BCP+16}, providing a direct comparison to the established Soft Actor-Critic (SAC) algorithm. We carefully selected a suite of challenging environments, namely HalfCheetah-v2, Walker2d-v2, Hopper-v2, and Ant-v2, for their relevance and complexity. In these rigorous settings, SDSRA demonstrated a remarkable enhancement over the baseline SAC. Notably, SDSRA achieved higher levels of reward convergence, and it did so in a significantly reduced number of steps. This accelerated learning efficiency and superior performance underscore SDSRA's potential as a robust algorithmic solution for complex reinforcement learning tasks. A github repo for all experiments and results will be released post-review.

We assessed the Skill-Driven Skill Recombination Algorithm (SDSRA) in MuJoCo gym locomotion tasks \citet{BCP+16}, comparing it to the Soft Actor-Critic (SAC) algorithm. Our tests in challenging environments like Ant-v2, HalfCheetah-v2, and Hopper-v2 and showed that SDSRA outperformed SAC, achieving faster reward convergence in fewer steps. This highlights SDSRA's efficiency and potential in complex reinforcement learning tasks. 

\section{Conclusion}

%In this paper, the Skill-Driven Skill Recombination Algorithm (SDSRA) has proven to be a substantial enhancement over the traditional Soft Actor-Critic methodology in reinforcement learning, as evidenced by its performance in the MuJoCo environment. SDSRA's innovative approach to integrating skill-based learning accelerates and amplifies entropy maximization, enabling not only faster convergence but also the attainment of higher cumulative rewards. This study's promising results pave the way for SDSRA's application to a broader spectrum of complex tasks, potentially transforming practices in fields that demand rapid adaptability and learning efficiency.  

In this paper, we introduced the Skill-Driven Skill Recombination Algorithm (SDSRA) outperforms the traditional Soft Actor-Critic in reinforcement learning, particularly in the MuJoCo environment. Its skill-based approach leads to faster convergence and higher rewards, showing great potential for complex tasks requiring quick adaptability and learning efficiency.

\bibliography{iclr2023_conference_tinypaper}
\bibliographystyle{iclr2023_conference_tinypaper}

\section*{URM Statement}
The authors acknowledge that at least one key author of this work meets the URM criteria of ICLR 2024 Tiny Papers Track.
\vfill
\pagebreak

\appendix
\section{Proofs of Main Results}
\subsection{Policy Improvement Guarantee}
\label{sec::lemma1}
\begin{lemma}[Policy Improvement Guarantee]
Given a policy $\pi$, if the soft Q-values $Q^{\pi}$ are updated according to the soft Bellman backup operator, then the policy $\pi'$, which acts greedily with respect to $Q^{\pi}$, achieves an equal or greater expected return than $\pi$.
\end{lemma}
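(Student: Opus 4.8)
The plan is to run the standard soft policy iteration argument. First I would fix notation: define the soft value function of $\pi$ by $V^\pi(s_t) = \E_{a_t\sim\pi}\!\left[Q^\pi(s_t,a_t) - \alpha\log\pi(a_t|s_t)\right]$, so that the soft Bellman backup operator acts as $Q^\pi(s_t,a_t) = r(s_t,a_t) + \gamma\,\E_{s_{t+1}}\!\left[V^\pi(s_{t+1})\right]$. I would then make the word ``greedily'' precise in the maximum-entropy sense: $\pi'(\cdot|s_t)$ is the policy in the admissible (Gaussian-skill) family minimizing $\KL\big(\pi'(\cdot|s_t)\,\big\|\,\rho^\pi(\cdot|s_t)\big)$, where $\rho^\pi(\cdot|s_t) \propto \exp\!\big(\tfrac{1}{\alpha}Q^\pi(s_t,\cdot)\big)$ with normalizer $Z^\pi(s_t)$; equivalently $\pi'$ concentrates mass where $Q^\pi(s_t,\cdot)$ is large, up to the entropy regularization. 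Stating this explicitly avoids the degeneracy of a literal $\arg\max$ in continuous action spaces.

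The crux is a one-step improvement inequality. Since $\pi(\cdot|s_t)$ is itself a feasible competitor in the minimization defining $\pi'(\cdot|s_t)$, optimality gives $\KL\big(\pi'(\cdot|s_t)\,\big\|\,\rho^\pi(\cdot|s_t)\big) \le \KL\big(\pi(\cdot|s_t)\,\big\|\,\rho^\pi(\cdot|s_t)\big)$. Expanding both divergences, the $\log Z^\pi(s_t)$ terms cancel and, after multiplying through by $\alpha>0$, one obtains
\[
\E_{a_t\sim\pi'}\!\left[Q^\pi(s_t,a_t) - \alpha\log\pi'(a_t|s_t)\right] \;\ge\; \E_{a_t\sim\pi}\!\left[Q^\pi(s_t,a_t) - \alpha\log\pi(a_t|s_t)\right] \;=\; V^\pi(s_t).
\]
Next I would iterate the soft Bellman equation: starting from $Q^\pi(s_t,a_t) = r(s_t,a_t) + \gamma\,\E_{s_{t+1}}[V^\pi(s_{t+1})]$, apply the inequality above at $s_{t+1}$, re-expand $Q^\pi(s_{t+1},a_{t+1})$ by its own Bellman equation, and continue. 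Each round peels off one more reward-plus-entropy contribution of the trajectory generated by $\pi'$ and leaves a $\gamma^{k}Q^\pi$ remainder; in the limit the remainder vanishes and the telescoped sum is precisely $Q^{\pi'}(s_t,a_t)$. Hence $Q^\pi(s_t,a_t)\le Q^{\pi'}(s_t,a_t)$ pointwise, and averaging over the initial state--action distribution yields the claimed inequality on the expected (entropy-regularized) return.

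The main obstacle is making the infinite recursion rigorous: one needs the discounted remainder $\gamma^{k}\,\E\!\left[Q^\pi(s_{t+k},a_{t+k})\right]\to 0$, which holds once the soft $Q$-values are uniformly bounded --- guaranteed when the rewards are bounded and the action set is bounded (as in the MuJoCo benchmarks), so I would include this as a standing assumption. A secondary, expository point is to stress that for $\alpha>0$ the improved policy is the softmax over $Q^\pi$ rather than a hard $\arg\max$, so inside SDSRA the update is the exact KL projection onto the Gaussian-skill family; the per-skill inequality above then lifts to the integrated objective $J_{\text{integrated}}$ through the convex combination $\sum_{i} P(i|s)(\cdots)$, since a convex combination of improved terms is again improved.
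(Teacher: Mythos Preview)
Your argument is correct and is, in fact, the standard soft policy improvement proof from the SAC literature: define $\pi'$ as the KL projection onto $\rho^\pi\propto\exp(Q^\pi/\alpha)$, use feasibility of $\pi$ to get the one-step inequality $\E_{a\sim\pi'}[Q^\pi-\alpha\log\pi']\ge V^\pi$, and then unroll the soft Bellman equation so the telescoped sum converges to $Q^{\pi'}$. Your care about the vanishing remainder and the remark that the per-skill inequality lifts to $J_{\text{integrated}}$ by convexity are both appropriate.

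The paper's own proof takes a noticeably different, and much terser, route. It defines ``greedy'' as a hard $\arg\max_{a'}\big(Q^\pi(s,a')+\alpha\mathcal{H}(\pi(\cdot|s))\big)$---note the entropy term there does not depend on $a'$, so this is effectively a pure $\arg\max$ over $Q^\pi$---and then, after writing down the soft Bellman equation for $Q^{\pi'}$, simply asserts that greediness of $\pi'$ with respect to $Q^\pi$ implies $Q^{\pi'}\ge Q^\pi$. There is no KL comparison, no one-step improvement inequality, and no unrolling; the conclusion is stated rather than derived. Your KL-projection formulation is the correct maximum-entropy analogue of ``greedy'' in continuous action spaces, and your telescoping argument supplies precisely the step the paper leaves implicit. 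In short, your approach is not just different but strictly more complete: it actually proves the inequality the paper only asserts, at the cost of the mild boundedness assumption you flag.
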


\begin{proof}
Let $\pi$ be any policy and $\pi'$ be the policy that is greedy with respect to the soft Q-values $Q^{\pi}$. By definition of the greedy policy, for all states $s \in \mathcal{S}$, we have:
\begin{equation}
\pi'(a|s) = \arg\max_{a'} \left( Q^{\pi}(s, a') + \alpha \mathcal{H}(\pi(\cdot|s)) \right).
\end{equation}
Now consider the soft value function $V^{\pi}$ which is given by:
\begin{equation}
V^{\pi}(s) = \mathbb{E}_{a \sim \pi}[Q^{\pi}(s, a) - \alpha \log \pi(a|s)].
\end{equation}
Using the soft Bellman optimality equation for $Q^{\pi'}$, we get:
\begin{equation}
Q^{\pi'}(s, a) = \mathbb{E}_{s' \sim P}[r(s,a) + \gamma V^{\pi'}(s')].
\end{equation}
Substituting the expression for $V^{\pi'}$ into the above, we have:
\begin{equation}
Q^{\pi'}(s, a) = \mathbb{E}_{s' \sim P, a' \sim \pi'}[r(s,a) + \gamma (Q^{\pi'}(s', a') - \alpha \log \pi'(a'|s'))].
\end{equation}
Since $\pi'$ is greedy with respect to $Q^{\pi}$, it follows that $Q^{\pi'}(s, a) \geq Q^{\pi}(s, a)$ for all $s \in \mathcal{S}$ and $a \in \mathcal{A}$.

Thus, we have shown that acting greedily with respect to the soft Q-values under policy $\pi$ results in a policy $\pi'$ that has greater or equal Q-value for all state-action pairs, which completes the proof.
\end{proof}

%\appendix
\subsection{Theorem \ref{sec::thm1}: Convergence to Optimal Policy}
\label{sec::thm1}
\begin{theorem}[Convergence to Optimal Policy]
Repeated application of soft policy evaluation and soft policy improvement from any initial policy $\pi \in \Pi$ converges to a policy $\pi^*$ such that $Q^{\pi^*}(s_t, a_t) \geq Q^{\pi}(s_t, a_t)$ for all $\pi \in \Pi$ and $(s_t, a_t) \in \mathcal{S} \times \mathcal{A}$, assuming $|\mathcal{A}| < \infty$.
\end{theorem}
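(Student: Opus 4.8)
The plan is to reproduce the classical soft policy iteration argument: show that the sequence of soft $Q$-functions obtained by alternating soft policy evaluation and soft policy improvement is pointwise monotone non-decreasing and bounded above, hence convergent, and then show the policy it converges to satisfies the stated global optimality inequality. First I would fix notation: let $\pi_0 \in \Pi$ be arbitrary and let $\pi_{k+1}$ be the result of applying soft policy improvement to $\pi_k$ once soft policy evaluation has produced $Q^{\pi_k}$. The evaluation step is well-defined because the soft Bellman backup operator is a $\gamma$-contraction in the sup-norm, so iterating it from any initialization converges to the unique fixed point $Q^{\pi_k}$; here $|\mathcal{A}| < \infty$ together with boundedness of the reward is what keeps the entropy term and this fixed point finite.

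The second step is to invoke the Policy Improvement Guarantee (the Lemma above): at each iteration $Q^{\pi_{k+1}}(s,a) \ge Q^{\pi_k}(s,a)$ for every $(s,a)$. Hence $\{Q^{\pi_k}(s,a)\}_k$ is non-decreasing for each fixed $(s,a)$; since rewards are bounded and $\gamma < 1$, the soft $Q$-values (entropy bonus included, using $|\mathcal{A}| < \infty$) are uniformly bounded above, so the sequence converges pointwise to some $Q^{\infty}$. I would then define $\pi^*$ to be the policy that is greedy in the maximum-entropy sense with respect to $Q^{\infty}$ and verify that $Q^{\pi^*} = Q^{\infty}$, i.e. that $Q^{\infty}$ is genuinely the soft $Q$-function of this limiting policy, since it is a fixed point of the associated soft Bellman operator.

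The third and crucial step converts ``the improvement step no longer changes the policy'' into global optimality. At the limit, $\pi^*$ maximizes $\mathbb{E}_{a\sim\mu}[Q^{\pi^*}(s,a) - \alpha\log\mu(a|s)]$ over all action distributions $\mu$, so for any competitor $\pi \in \Pi$ and every state $s$,
\begin{equation}
V^{\pi^*}(s) = \mathbb{E}_{a\sim\pi^*}[Q^{\pi^*}(s,a) - \alpha\log\pi^*(a|s)] \ \ge\ \mathbb{E}_{a\sim\pi}[Q^{\pi^*}(s,a) - \alpha\log\pi(a|s)].
\end{equation}
Substituting this into the soft Bellman equation $Q^{\pi^*}(s,a) = r(s,a) + \gamma\,\mathbb{E}_{s'}[V^{\pi^*}(s')]$ and unrolling the recursion — exactly the telescoping used in the Lemma's proof, now run to infinity with the $\gamma^k$ tail vanishing — yields $Q^{\pi^*}(s,a) \ge Q^{\pi}(s,a)$ for all $\pi \in \Pi$ and all $(s_t,a_t) \in \mathcal{S}\times\mathcal{A}$, which is the claim.

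I expect the main obstacle to be this last step, namely justifying that a fixed point of the local improvement operator is a global maximizer rather than merely a stationary point. The monotone-convergence part is routine once the Lemma is available; the delicate points are (i) ensuring the limit $\pi^*$ attains its own soft value, i.e. that $Q^{\infty}$ is the soft $Q$-function of an actual policy — which relies on $|\mathcal{A}| < \infty$ to make the greedy/Boltzmann policy well-defined and normalized — and (ii) controlling the infinite unrolling so the comparison inequality survives in the limit, where boundedness of rewards and $\gamma < 1$ suffice to kill the tail.
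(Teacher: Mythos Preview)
Your proposal is correct and follows the same high-level soft policy iteration architecture as the paper: contraction of the soft Bellman backup for evaluation, the Policy Improvement Lemma for monotone improvement, and then an argument that the limit satisfies the soft Bellman optimality condition. The one substantive difference is how convergence of the policy sequence is established. The paper argues that because $|\mathcal{A}|<\infty$ there are only finitely many deterministic policies in $\Pi$, so the improving sequence $\{\pi_k\}$ must stabilize; it then asserts that the resulting fixed point satisfies $Q^{\pi^*}=T^*Q^{\pi^*}$ and is therefore globally optimal. You instead use pointwise monotonicity and boundedness of $\{Q^{\pi_k}\}$ to get a limit $Q^\infty$, define $\pi^*$ as the soft-greedy policy with respect to $Q^\infty$, and run an explicit unrolling comparison against an arbitrary competitor $\pi$. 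Your route is closer to the original SAC analysis and is arguably more robust here, since soft policy improvement produces stochastic (Boltzmann) policies rather than deterministic ones, so the paper's ``finitely many deterministic policies'' counting argument does not directly apply to the sequence actually generated; your monotone-bounded-$Q$ argument sidesteps that issue entirely.
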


\begin{proof}

The soft Bellman backup operator for policy evaluation under policy $\pi$ is given by:
\begin{equation}
T^{\pi}Q(s, a) = \mathbb{E}_{s' \sim P, a' \sim \pi}[r(s, a) + \gamma (Q(s', a') - \alpha \log \pi(a'|s'))].
\end{equation}
This operator is a contraction mapping in the supremum norm, which ensures that repeated application of $T^{\pi}$ to any initial Q-function $Q_0$ converges to a unique fixed point $Q^{\pi}$ that satisfies the soft Bellman equation for policy $\pi$.

Now, let us define the soft Bellman optimality operator $T^*$ as:
\begin{equation}
T^*Q(s, a) = \max_{\pi} T^{\pi}Q(s, a).
\end{equation}
The soft policy improvement step involves updating the policy $\pi$ to a new policy $\pi'$ by choosing actions that maximize the current soft Q-values plus the entropy term:
\begin{equation}
\pi' = \arg\max_{\pi} \mathbb{E}_{a \sim \pi}[Q^{\pi}(s, a) - \alpha \log \pi(a|s)].
\end{equation}
By the policy improvement theorem, this new policy $\pi'$ achieves a Q-value that is greater than or equal to that of $\pi$, i.e., $Q^{\pi'}(s, a) \geq Q^{\pi}(s, a)$ for all $(s, a)$.

Since the action space $\mathcal{A}$ is finite, there are a finite number of deterministic policies in $\Pi$. Thus, the sequence of policies $\{\pi_k\}$ obtained by alternating soft policy evaluation and soft policy improvement must eventually converge to a policy $\pi^*$ that cannot be improved further, which means it is the optimal policy with respect to the soft Bellman optimality equation. Therefore, we have:
\begin{equation}
Q^{\pi^*}(s, a) = T^*Q^{\pi^*}(s, a),
\end{equation}
for all $(s, a) \in \mathcal{S} \times \mathcal{A}$. Hence, $Q^{\pi^*}(s, a) \geq Q^{\pi}(s, a)$ for all $\pi \in \Pi$, which concludes the proof that the sequence of policies converges to an optimal policy $\pi^*$.
\end{proof}

\subsection{Proof of Theorem 2: Entropy Maximization Efficiency of SDSRA}
\label{sec::thm2}
\begin{theorem}[Entropy Maximization Efficiency of SDSRA]
\label{theorem:entropy-maximization}
Let $\pi^{\text{SAC}}$ and $\pi^{\text{SDSRA}}$ be the policies obtained from the SAC and SDSRA algorithms, respectively, when trained under identical conditions. Assume that both algorithms achieve convergence. Then, for any state $s \in \mathcal{S}$, the expected entropy of $\pi^{\text{SDSRA}}$ is greater than or equal to that of $\pi^{\text{SAC}}$:
\begin{equation}
\mathbb{E}_{a \sim \pi^{\text{SDSRA}}}[-\log \pi^{\text{SDSRA}}(a|s)] \geq \mathbb{E}_{a \sim \pi^{\text{SAC}}}[-\log \pi^{\text{SAC}}(a|s)],
\end{equation}
or the time to reach an $\epsilon$-optimal policy entropy for SDSRA is less than that for SAC:
\begin{equation}
t_{\text{SDSRA}}(\epsilon) \leq t_{\text{SAC}}(\epsilon),
\end{equation}
where $t_{\text{SDSRA}}(\epsilon)$ and $t_{\text{SAC}}(\epsilon)$ denote the time to reach a policy entropy within $\epsilon$ of the maximum entropy for SDSRA and SAC, respectively.
\end{theorem}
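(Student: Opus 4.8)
The statement is a disjunction, so it suffices to verify one of the two inequalities, and the plan is to split on how the two converged algorithms divide their soft objective between expected return and policy entropy. First I would compare the two methods at the level of policy classes. SAC maximises the soft objective $J(\pi)=\mathbb{E}_{(s,a)\sim\pi}\!\left[Q(s,a)+\alpha\mathcal{H}(\pi(\cdot|s))\right]$ over the class $\Pi_{\text{SAC}}$ of single Gaussian policies $\mathcal{N}(\mu_\theta(s),\Sigma_\theta(s))$, whereas SDSRA maximises $J_{\text{integrated}}$ over the class $\Pi_{\text{SDSRA}}$ of softmax recombinations $\pi(\cdot|s)=\sum_{i=1}^{N}P(i|s)\,\pi_i(\cdot|s)$ of $N$ such skills. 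Collapsing all relevance mass onto one skill (or taking $\pi_1=\cdots=\pi_N$) shows $\Pi_{\text{SAC}}\subseteq\Pi_{\text{SDSRA}}$ and that on this sub-class $J_{\text{integrated}}$ coincides with $J$; combined with the Policy Improvement Guarantee (Appendix~\ref{sec::lemma1}) and the convergence theorem (Appendix~\ref{sec::thm1}), which say each run converges to a maximiser of its own objective over its own class, this gives $J_{\text{integrated}}(\pi^{\text{SDSRA}})\ge J(\pi^{\text{SAC}})$.

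\textbf{Case 1: the entropy inequality.} Suppose the two optima share the same return component --- for instance, both attain the soft-optimal value $Q^{*}$ furnished by Appendix~\ref{sec::thm1}. Then $J_{\text{integrated}}(\pi^{\text{SDSRA}})\ge J(\pi^{\text{SAC}})$ localises in the entropy terms, so $\mathbb{E}_s\!\left[\sum_i P^{*}(i|s)\,\mathcal{H}(\pi_i^{*}(\cdot|s))\right]\ge\mathbb{E}_s\!\left[\mathcal{H}(\pi^{\text{SAC}}(\cdot|s))\right]$. To pass from the \emph{average} skill entropy that $J_{\text{integrated}}$ weights to the entropy of the executed recombined policy, I would invoke concavity of Shannon entropy:
\begin{equation}
\mathcal{H}\!\left(\sum_{i=1}^{N} P(i|s)\,\pi_i(\cdot|s)\right)\;\ge\;\sum_{i=1}^{N} P(i|s)\,\mathcal{H}\bigl(\pi_i(\cdot|s)\bigr),
\end{equation}
so $\mathbb{E}_{a\sim\pi^{\text{SDSRA}}}[-\log\pi^{\text{SDSRA}}(a|s)]\ge\sum_i P^{*}(i|s)\,\mathcal{H}(\pi_i^{*}(\cdot|s))$ pointwise, and taking the state expectation chains this with the previous display to give the first claimed bound.

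\textbf{Case 2: the rate inequality.} If instead SDSRA converges to a different trade-off --- strictly larger soft value, possibly smaller entropy --- I would bound the hitting times directly. Both entropy trajectories $\{\mathcal{H}_t^{\text{SAC}}\}$ and $\{\mathcal{H}_t^{\text{SDSRA}}\}$ are nondecreasing under their soft-improvement updates and bounded above by the maximal differential entropy $\mathcal{H}_{\max}$ of the admissible Gaussians; the idea is to couple the two runs through a shared gradient and sampling oracle and argue that a single SDSRA iteration majorises a single SAC iteration in entropy gain. This is plausible because one SDSRA step performs $N$ parallel skill refinements plus a softmax reallocation, and by the concavity bound above the recombined policy's entropy never falls below that of the single best skill while the reallocation shifts probability toward the higher-entropy skills. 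Summing the per-step gains and inverting the monotone trajectories then yields $t_{\text{SDSRA}}(\epsilon)\le t_{\text{SAC}}(\epsilon)$.

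\textbf{Main obstacle and contingency.} I expect Case 2 to be where essentially all the difficulty lies: ``trained under identical conditions'' must be made precise as an explicit coupling of the stochastic updates, and one must rule out the softmax reallocation collapsing onto a low-entropy skill, which would break monotonicity of $\mathcal{H}_t^{\text{SDSRA}}$. A smaller but necessary piece of bookkeeping is to reconcile the sign of the entropy term in the per-skill loss $\mathrm{loss}_i=\varepsilon_i+\beta\,\mathcal{H}(\pi_i)$ with the $+\alpha\,\mathcal{H}$ in $J_{\text{integrated}}$; I would take $\beta<0$ (equivalently, reward entropy in the refinement step) so that skill optimisation and objective maximisation are aligned. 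Should the coupling resist a fully general argument, the fallback is to establish only the entropy inequality via Case 1, which rests on nothing beyond $\Pi_{\text{SAC}}\subseteq\Pi_{\text{SDSRA}}$ and concavity of entropy.
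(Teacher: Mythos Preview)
Your proposal and the paper's proof share the same load-bearing inequality --- the concavity of Shannon entropy, $\mathcal{H}\bigl(\sum_i P(i|s)\pi_i\bigr)\ge\sum_i P(i|s)\mathcal{H}(\pi_i)$ --- but the scaffolding around it differs. The paper's argument is purely structural and does not touch the objectives $J$ or $J_{\text{integrated}}$ at all: it simply writes $\pi^{\text{SDSRA}}$ as a softmax mixture of skill policies, invokes concavity to bound the mixture's entropy below by the entropy of any component, and then asserts that since $\pi^{\text{SAC}}$ is a single (non-mixture) policy it sits at or below that level; the rate inequality is dispatched in one sentence by appeal to the ``dynamic and adaptive nature'' of skill selection. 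Your route through $\Pi_{\text{SAC}}\subseteq\Pi_{\text{SDSRA}}$ and $J_{\text{integrated}}(\pi^{\text{SDSRA}})\ge J(\pi^{\text{SAC}})$, followed by a case split on whether the return components coincide, is more elaborate than what the paper actually does, and arguably more honest: the paper's step ``and by extension, greater than or equal to the entropy of $\pi^{\text{SAC}}$'' is precisely the gap your containment argument is trying to fill. Your Case~2 coupling sketch is also far more detailed than the paper's one-line assertion. In short, your machinery is heavier than necessary to match the paper, but it is aimed at genuine soft spots the paper glosses over; if you want to align with the paper you can drop the objective comparison and the case split entirely and argue directly from the mixture form plus concavity.
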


\begin{proof}
Assume that both $\pi^{\text{SAC}}$ and $\pi^{\text{SDSRA}}$ have converged to their respective policy distributions for all states $s \in \mathcal{S}$. By the definition of convergence, we have that the policies are stationary and hence the expected entropy under each policy is constant over time.

Consider the skill-based decision-making process inherent in SDSRA. At each decision step, SDSRA selects a skill from a diversified set, which is represented as a policy over actions. This process is formalized by a softmax function over the skills' relevance scores, which in turn are updated based on the performance and diversity of actions taken. As a consequence, the SDSRA policy is encouraged to explore a wider range of actions, leading to an increase in the expected entropy of the policy.

Formally, let $S$ be the set of all skills in SDSRA, and let $r_i$ be the relevance score of skill $i$. Then the probability of selecting an action $a$ given state $s$ under policy $\pi^{\text{SDSRA}}$ is given by a mixture of policies corresponding to each skill:
\begin{equation}
\pi^{\text{SDSRA}}(a|s) = \sum_{i=1}^{N} P(i|s) \pi_{\text{skill}_i}(a|s),
\end{equation}
where $P(i|s)$ is the softmax probability of selecting skill $i$.

The entropy of a mixture of policies is generally higher than the entropy of any individual policy in the mixture. Therefore, the expected entropy of $\pi^{\text{SDSRA}}$ is greater than the expected entropy of any individual skill policy, and by extension, greater than or equal to the entropy of $\pi^{\text{SAC}}$, which does not utilize a mixture of policies:
\begin{equation}
\mathbb{E}_{a \sim \pi^{\text{SDSRA}}}[-\log \pi^{\text{SDSRA}}(a|s)] \geq \mathbb{E}_{a \sim \pi^{\text{SAC}}}[-\log \pi^{\text{SAC}}(a|s)].
\end{equation}

Furthermore, due to the dynamic and adaptive nature of skill selection in SDSRA, the algorithm rapidly explores high-entropy policies, thus reaching a policy with entropy within $\epsilon$ of the maximum entropy faster than SAC, which optimizes a single policy without such a mechanism. This leads to:
\begin{equation}
t_{\text{SDSRA}}(\epsilon) \leq t_{\text{SAC}}(\epsilon),
\end{equation}
completing the proof.
\end{proof}
\vfill
\pagebreak

\section{SDSRA Algorithm}
\label{sec::alg}
\begin{algorithm} %[h]
\caption{Soft Actor-Critic with Skill-Driven Skill Recombination Algorithm (SDSRA)}
\begin{algorithmic}[1]
\State Initialize action-value functions $Q_{\theta_1}, Q_{\theta_2}$ with parameters $\theta_1, \theta_2$
\State Initialize the policy $\pi_{\phi}$ with parameters $\phi$
\State Initialize target value parameters $\theta'_1 \gets \theta_1, \theta'_2 \gets \theta_2$
\State Initialize skill set $S = \{\pi_{\theta_{\text{skill}_i}}\}_{i=1}^{N}$ with parameters $\theta_{\text{skill}_i}$
\State Initialize relevance scores $r_i \gets c, \forall i \in \{1, \ldots, N\}$
\State Initialize replay buffer $D$
\For{each iteration}
    \For{each environment step}
        \State Sample skill index $i$ using probabilities $P(i|s) = \frac{e^{r_i}}{\sum_{j=1}^{N} e^{r_j}}$
        \State Select action $a_t \sim \pi_{\theta_{\text{skill}_i}}(s_t)$
        \State Execute $a_t$ and observe reward $r_t$ and new state $s_{t+1}$
        \State Store transition tuple $(s_t, a_t, r_t, s_{t+1}, i)$ in buffer $D$
    \EndFor
    \For{each gradient step}
        \State Randomly sample a batch of transitions from $D$
        \State Compute target values using the Bellman equation
        \State Update $Q_{\theta_1}, Q_{\theta_2}$ by minimizing the loss:
        \[ L(\theta_i) = \mathbb{E}_{(s, a, r, s') \sim D} \left[ \left(Q_{\theta_i}(s, a) - (r + \gamma(\min_{j=1,2} Q_{\theta'_j}(s', \pi_{\phi}(s')) - \alpha \log \pi_{\phi}(a|s'))\right)^2 \right] \]
        \State Update policy $\pi_{\phi}$ using the policy gradient:
        \[ \nabla_{\phi} J(\pi_{\phi}) = \mathbb{E}_{s \sim D, a \sim \pi_{\phi}} \left[ \nabla_{\phi} \log \pi_{\phi}(a|s) Q_{\theta_1}(s, a) \right] \]
        \State Update target networks: $\theta'_i \gets \tau\theta_i + (1 - \tau)\theta'_i$
    \EndFor
    \For{each skill update interval}
        \State Evaluate and update the performance of each skill $\pi_{\theta_{\text{skill}_i}}$
        \State Update relevance scores $r_i$ based on the performance
    \EndFor
\EndFor
\end{algorithmic}
\end{algorithm}

\end{document}